\begin{document}

\mainmatter  % start of an individual contribution

% first the title is needed
\title{Self-adaptive node-based PCA encodings}

% a short form should be given in case it is too long for the running head
%\titlerunning{Lecture Notes in Computer Science: Authors' Instructions}

% the name(s) of the author(s) follow(s) next
%
% NB: Chinese authors should write their first names(s) in front of
% their surnames. This ensures that the names appear correctly in
% the running heads and the author index.
%
\author{Leonard Johard \and Victor Rivera \and Manuel Mazzara \and JooYoung Lee}

\authorrunning{}
%
%\authorrunning{Lecture Notes in Computer Science: Authors' Instructions}
% (feature abused for this document to repeat the title also on left hand pages)

% the affiliations are given next; don't give your e-mail address
% unless you accept that it will be published
\institute{Innopolis University\\
1, Universitetskaya Str., Innopolis, Russia, 420500\\
\mailsa\\
\url{https://www.university.innopolis.ru}}

%\mailsa\\
%\mailsb\\
%\mailsc\\
%\url{http://www.springer.com/lncs}}

%
% NB: a more complex sample for affiliations and the mapping to the
% corresponding authors can be found in the file "llncs.dem"
% (search for the string "\mainmatter" where a contribution starts).
% "llncs.dem" accompanies the document class "llncs.cls".
%

%\toctitle{Lecture Notes in Computer Science}
%\tocauthor{Authors' Instructions}
\maketitle
\abstract{In this paper we propose an algorithm, Simple Hebbian PCA, and prove that it is able to calculate the principal component analysis (PCA) in a distributed fashion across nodes. It simplifies existing network structures by removing intralayer weights, essentially cutting the number of weights that need to be trained in half. 
}

\section{Introduction}
Innovative engineering always looks for smart solutions that can be deployed on the territory for both civil and military applications and, at the same time, aims at creating adequate instruments to support developers all along the development process so that correct software can be deployed. Modern technological solutions imply a vast use of sensors to monitor an equipped area and collect data, which will be then mined and analyzed for specific purposes. Classic examples are smart buildings and smart cities \cite{Salikhov2016a,Salikhov2016b}. 

Sensor integration across multiple platforms can generate vast amounts of data that need to be analyzed in real-time both by algorithmic means and by human operators. The nature of this information is unpredictable a priori, given that sensors are likely to encounter both naturally variable conditions in the field and disinformation attempts targeting the network protocols. 

This information needs to be transmitted through a distributed combat cloud with variable but limited bandwidth available at each node. Furthermore, the protocol has to be resistant to multiple node failures. 

The scaling of the information distribution also benefits from a pure feedforward nature, since the need for bidirectional communication scales poorly with the likely network latency and information loss, both of which are considerable in practical scenarios \cite{soyata2012combat,kruger2014implementing}. This requirement puts our desired adaptive system into the wider framework of recent highly scalable feedforward algorithms that have been inspired by biology \cite{johard2014connectionist}.

\section{Linear sensor encodings}
Linear encoding of sensor information has a disadvantage in that it cannot make certain optimizations, such as highly efficient Hoffman-like encodings on the bit level. On the other hand, it is very robust when it encodes continuous data, since it is isometric. This means that we will not see large disruptions in the sample distance and makes linear encodings highly suitable for later machine learning analysis and human observation. This isometry also makes the encoding resistant to noisy data transfers, which is essential in order to achieve efficient network scaling of real-time data.

The advantage of a possible non-linear encoding is further diminished if we consider uncertainty in our data distribution estimate. A small error in our knowledge can cause a large inefficiency in the encoding and large losses for lossy compression. For linear encodings all these aspects are limited, especially considering the easy use of regularization methods.

The advantage of linear encodings is that they possess a particular set of series of useful properties. To start with, if our hidden layer Y forms an orthonormal basis of the input layer we can represent the encoding as :

\begin{equation}I_{tot} = I_1 + I_2 ... + I_n + e^2 \end{equation}
Here $I_{tot}$ is the variance $\sum\limits_{i}(X_i^2)$ in the input space, $I_n$ is the variance of each component of $Y$ and $e^2$ is the squared error of the encoding. This is obvious if we add the excluded variables $y_{n+1} ... y_m$ and consider a single data point:
\begin{equation}x_i^2 = y_1^2 + y_2^2 ...  + y_n^2 + y_{n+1}^2 ... + y_m^2\end{equation}
and 
\begin{equation}y_{n+1} ... + y_m = e_i^2\end{equation}
where $e_i$ is the error for data point I . Summing both sides and dividing by number of data points and we get:
\begin{equation}var(I) = var(y_1) + ...  + var(y_n) + e^2\end{equation}

\section{PCA in networks}
The problem of encoding in node networks is usually considered from the perspective of neural networks. We will keep this terminology to retain the vocabulary predominant in literature. A recent review of current algorithms for performing principal component analysis (PCA) in a node network or neural network is \cite{qiu2012neural}. We will proceed here with deriving PCA in linear neural networks using a new simple notation, that we will later use to illustrate the new algorithms.

Assume inputs are normalized so that they have zero mean. In this case, each output $y_i$ can be described as $y_i = X^Tw$, where $x$ is the input vector and $w$ is the weights of the neuron and $i$ is the index of the input in the training data. The outputs form a basis of the input space and  if $\left \|  w_i \right \| = 1$ and $w_i^Tw_j = 0$ for all $i, j$, then the basis is orthonormal.

Let us first consider the simple case of a single neuron. We would like to maximize the variance on training data $E \left [ \frac{y^2}{2}\right ]$, where we define $y = X^Tw$, given an input matrix formed by placing column wise listing of all the presented inputs $X = [x_1, x_2 ... ]$ with the constraint $\left \|  w \right \| = 1$. Expanding:
\begin{equation}E \left [ \frac{y^2}{2} \right ] =  (X^Tw) ^T(X^Tw) = w^TXX^T w = w^TCw\end{equation}
where $C$ is the correlation matrix of our data, using the assumtions that inputs have zero mean. The derivative $\frac{\partial }{\partial w} E \left [ \frac{y^2}{2}\right ]$ is given by
\begin{equation}\frac{\partial }{\partial w} \frac{w^TCw}{2}  = XX^Tw = Xy\end{equation}
Note that the vector above describes the gradient of the variance in weight space. Taking a step of fixed length along the positive direction of this gradient derives the Hebb rule:
\begin{equation}w = w + \Delta w\end{equation}
\begin{equation}\Delta w = \eta Xy \end{equation}
Since we have no restrictions on the length of our weight vector, this will always have a component in the positive direction of $w$.  This unlimited growth of the weigth vector is easily limited by normalizing the weight vector $w$ after each step by dividing by length, $w_{norm} = \frac{w}{\left \|w\right \|}$.
If we thus restrict our weight vector to unit length and note that C is a positive semidefinite matrix we end up with a semi-definite programming problem:
\begin{equation}max\; w^TCw\end{equation}
subject to
\begin{equation}w^Tw = 1\end{equation}
It is thus guaranteed, except if we start at an eigenvector, that gradient ascent converges to the global maximum, i.e. the largest principal component.
Alternatives to weight normalization is to subtract the $e_w$ component of the gradient explicitly, where $e_w$ is the unit vector in the direction of $w$.  In this case we would calculate:
\begin{equation}\frac{\partial }{\partial w} (\frac{y^2}{2}) - (\frac{\partial }{\partial w} (\frac{y^2}{2}) \cdot e_w) e_w \end{equation}
For a step-based gradient ascent we can not assume $\left \|  w_i \right \|$ will be kept constant in the step direction. We can instead use the closely related
\begin{equation}\frac{\partial }{\partial w}(\frac{y^2}{2})  - w^Tw(\frac{\partial }{\partial w} (\frac{y^2}{2})  \cdot e_w) e_w\end{equation}
The difference is that the $w$ overcompensates for the $e_w$ component if $w^Tw > 1$ and vice versa. This essentially means that $\left \|  w_i \right \|$ will converge towards 1.
\begin{equation}\Delta w = \eta (Xy – wy^Ty) = \eta (XX^Tw - w^TXX^Tww) \end{equation}
\begin{equation}= \eta (Cw  -  w^TCww)\end{equation}
The derivative orthogonal to the constraint can be calculated as follows:
\begin{equation}\Delta w \cdot e_w = \eta w^T(Cw  - w^TCww)  = \eta(w^TCw - w^Tw^TCww) \end{equation}
This means that we have an optimum if \begin{equation}\label{eq:maxoja}((w^TCw)  - ww^T(w^TCw)) = 0\end{equation} 
 Since $w^TCw$ is a scalar, $w$ is an eigenvector of $C$ with eigenvalue $w^TCw$.  
Equation \ref{eq:maxoja} gives that $w^Tw = 1$

This is learning algorithm is equivalent to Oja's rule \cite{oja1982simplified}.

\subsection{Generalized Hebbian Algorithm}

The idea behind the generalized Hebbian algorithm (GHA) \cite{sanger1989} is as follows:\\ 

\begin{itemize}
\item[] 1. Use Oja's rule to get $w_i$
\item[] 2. Use deflation to remove variance along $e_{w_i}$
\item[] 3. i := i +1
\item[] 4. Go to step 1

\end{itemize}
Subtraction of the $w$-dimension projects the space into the subspace spanned by the remaining principal components. The target function $\frac{y(v_i)^2}{2}$ for all eigenvectors $v_i$ not eliminated by this projection, while $\frac{y(w)^2}{2}$ = 0 in the eliminated direction $w$.  Repeating the algorithm after this step guarantees that we will get the largest remaining component at each step.
The GHA requires several steps to calculate the smaller components and uses a specialized architecture.%, which can be seen in Figure \ref{fig:unsupervised1}. 
The signal needs to pass through $2(n-1)$ neurons in order to calculate the $n$-th principal component and uses two different types of neurons to achieve this.

% \begin{figure}
% \begin{center}
% \includegraphics[width=9cm]{unsupervisedfigs1.pdf}
% \label{fig:unsupervised1}
% \caption{Network structure for the generalized Hebbian algorithm, as originally published by Sanger [1989].}
% \end{center}
% \end{figure}

We define information as the square variance of the transmitted signal and seek encodings that will attempt to maximize the transmitted information. In other words, the total transmitted variance by a linear transform is equal to the variance of data projected onto a subspace of the original input space. The variance in this subspace plus the square error of our reconstruction is equal to the variance of the input.

Summarizing, minimizing the reconstruction error of our encoding  is equivalent to maximizing the variance of the output. This is complementary and not antagonistic to the concept of sparse encodings disentangling the factors of variation \cite{bengio2013representation}.

\subsection{Distributed PCA}
Principal component analysis is the optimal linear encoding minimizing the reconstruction error, but still leaves room open for improvement. Can we do better? In PCA, as much as information as possible is put in each consecutive component. This leaves the encoding vulnerable to the loss of a node or neuron, potentially losing a majority of the information as a result.  %In addition, when we later on wish to proceed from linear models to spike-based models a distributed encoding having large traffic down a single neuron is a problem.

The PCA subspace remains the optimal subspace in this sense regardless the vectors chosen to span it. Thus, any rotation the orthonormal basis is also an optimal linear encoding. 

\begin{theorem}
There exists an encoding of the PCA space such that the information along each component is equivalent, $I_n = I_m,  \forall n, m$. This encoding minimizes the maximum possible error of any combination $n-1$ components.
\end{theorem}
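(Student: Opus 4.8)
The plan is to work entirely inside the fixed $n$-dimensional PCA subspace and to parametrize every admissible encoding as an orthonormal basis of that subspace. Let $\Lambda = \mathrm{diag}(\lambda_1,\dots,\lambda_n)$ collect the eigenvalues of $C$ restricted to the subspace, so that in the eigenbasis the variance carried by component $j$ is $\lambda_j$. Any other orthonormal encoding is obtained by an orthogonal rotation $R$, and the variance along the $j$-th rotated component is $I_j = (R^T\Lambda R)_{jj}$. Because the trace is invariant under orthogonal conjugation, $\sum_j I_j = \sum_i \lambda_i =: S$ is the same for every encoding; this fixed total is what the first sentence of the theorem must redistribute evenly.

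First I would establish existence of an equalizing rotation, i.e.\ a basis with $I_j = S/n$ for all $j$. The cleanest constructive route is an inductive sequence of planar (Givens) rotations. Among the current variances, pick an index $p$ of maximal variance $I_{\max}$ and an index $q$ of minimal variance $I_{\min}$; the average $S/n$ always lies in $[I_{\min},I_{\max}]$, and hence between the two eigenvalues of the $2\times 2$ block that $C$ induces on the plane spanned by those two basis vectors, since the diagonal entries of a symmetric $2\times 2$ matrix are sandwiched by its eigenvalues. A single rotation in that plane can therefore set one diagonal entry exactly to $S/n$, trace preservation forcing the other to $I_{\max}+I_{\min}-S/n$. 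A rotation confined to one coordinate plane leaves the diagonal entries of all other components untouched, so I can freeze the newly equalized component and repeat on the remaining $n-1$, whose total is $(n-1)\,S/n$ and whose average is thus still $S/n$. After $n-1$ steps every component carries variance $S/n$. (Equivalently, this is an instance of the Schur--Horn theorem, as the constant vector $(S/n,\dots,S/n)$ is majorized by $(\lambda_1,\dots,\lambda_n)$.)

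The minimax claim then follows almost for free. By the orthonormal variance decomposition established earlier, dropping a single component $j$ from an $n$-component encoding incurs squared reconstruction error exactly $I_j$, so the worst case over all choices of the $n-1$ retained components equals $\max_j I_j$. For any encoding, $\max_j I_j \ge \frac{1}{n}\sum_j I_j = S/n$, with equality precisely when all variances coincide. The equalizing encoding constructed above attains this lower bound, hence it minimizes the worst-case single-component error, which is the assertion of the theorem.

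I expect the existence of the equalizing rotation to be the only real obstacle; the minimax optimality is then a one-line average-versus-maximum argument. Within the existence step the delicate point is the bookkeeping that a Givens rotation in one coordinate plane preserves the already-fixed diagonal entries, which is exactly what allows the induction to close. Invoking Schur--Horn bypasses this bookkeeping but hides the explicit construction that the paper's algorithmic, node-based setting may prefer to keep.
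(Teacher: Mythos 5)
Your proposal is correct, and its existence step is essentially the paper's own argument: the paper also equalizes the variances by repeated rotations in two-dimensional coordinate planes, asserting that whenever the $I_i$ are not all equal there are indices $i,j$ with $I_i < k < I_j$ and a rotation in their plane making $I_i = k$, then iterating. You make this rigorous where the paper is informal: you justify why the equalizing angle exists (the diagonal entries of the symmetric $2\times 2$ block are sandwiched by its eigenvalues, so the target value $S/n$ is reachable by continuity), and you give an explicit termination argument (freeze the newly equalized component and recurse on the remaining $n-1$, whose average is unchanged), both of which the paper leaves implicit; the Schur--Horn remark is a clean non-constructive alternative. The more substantial difference concerns the second sentence of the theorem: the paper's proof never addresses the minimax property at all --- after the equalization argument it merely restates the conclusion in matrix form, $\mathrm{diag}(WCW^T) = kI$ with $WW^T = I$, followed by algebraic manipulations that establish nothing further --- whereas you supply the missing argument: by the variance decomposition, dropping component $j$ costs exactly $I_j$, so the worst case over retained $(n-1)$-subsets is $\max_j I_j \ge \frac{1}{n}\sum_j I_j = S/n$, with equality precisely when all variances coincide, which the equalized encoding achieves. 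In short, your proof does not merely match the paper's; it closes a genuine gap in it, since the paper proves only existence of the equalized encoding and not its claimed optimality.
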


\begin{proof}

Starting from the eigenvectors $v_i$, we can rotate any pair of vectors in the plane spanned by these vectors. As long as orthogonality is preserved, the sum of the variance in the dimensions spanned by these vectors is constant. Expressed as an average:
\begin{equation}\sum_i I_i = \sum_i k \end{equation}

Now for this to be true and if not all variances $I_i$ are identical there has to exist a pair of indices $i$ and $j$ such that $I_i < k < I_j$. We can then find a rotation in the plane spanned by these vectors such that $I_i = k$.\\

This simple algorithm can be repeated until $\forall i: I_i = k$.\\

In matrix form this can be formulated as:\\
\begin{equation}diag(WCW^T) = kI\end{equation}
Orthonormal basis:
\begin{equation}WW^T = I\end{equation}
$=>$
\begin{equation}diag(WCW^T) - WW^T = (c - 1)I\end{equation}
\begin{equation}diag((WCW - W)W^T) = (c – 1)I\end{equation}
\begin{equation}W(C-I)W^T = (c-1)I \forall I\end{equation}

\end{proof}

This seems like a promising candidate for a robust linear encoding and future work will further explore the possibility for calculating these using Hebbian algorithms. For the moment, we will instead focus on the eigenvectors to the correlation matrix used in regular PCA.

 \subsection{Simple Hebbian PCA}
We propose a new method for calculating the PCA encoding $X \rightarrow Y$ in a single time step and using a single weight matrix $W$. 

For use in distributed transmission systems an ideal algorithm should process only local and explicitly transmitted information in terms of $X$ and $Y$ from its neighbors. In other words, each node possesses knowledge about its neighbors' transmission signal, but not their weights or other information. The Simple Hebbian PCA is described in pseudocode in algorithm \ref{ASHP}.

\begin{algorithm}
\caption{$ASHP$}\label{ASHP}
\begin{algorithmic}[]
\Require Initialized weight vector $w_i$ 
\Require Input matrix $X$ 
\Require Number of iterations $T$
\Require Number of nodes $N$
\Require Step size $\eta$
\For{$t \gets 1$ to $T$}
\For{$i \gets 1$ to $N$}
\State $y_i \gets Xw_i$
\EndFor
\For{$i \gets 1$ to $N$}
\State $\displaystyle w_i \gets w_i + \eta (Xy_i -  \sum_{j \mathop = 1}^i \frac{Xy_j y_i^Ty_j}{y_j^Ty_j})$  
\State $w_i \gets \frac{w_i}{w_i^Tw_i}$
\EndFor
\EndFor
\end{algorithmic}
\end{algorithm}

 \subsubsection{Convergence property}
The first principal component can be calculated as $\Delta w = Xy$. This step is equivalent to Oja's algorithm.
Let $n$ be the index of the largest eigenvector calculated so far. The known eigenvectors $v_1, v_2 ... v_n$ of the correlation matrix $C$ have corresponding eigenvalues $ \lambda_1, \lambda_2 ... \lambda_n$.
We can now calculate component $v_{n +  1}$.\\
\begin{lemma}
\begin{equation}\label{eq:hpcas1}f_n(w) = \frac{y^2}{2} - \sum_{i=1}^{n}\frac{y^Ty_iy^Ty_i}{2 \lambda_i^2}\end{equation}
 has for $w^Tw = 1$ a maximum at$ w = v_n + 1$, where $y = w^TX$ and $y_n = v_n^TX$\\
\end{lemma}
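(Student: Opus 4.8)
The plan is to work entirely in the eigenbasis of the correlation matrix $C$, where the constrained maximization becomes transparent. First I would write an arbitrary unit vector as $w = \sum_k c_k v_k$ with $\sum_k c_k^2 = 1$, using that the eigenvectors $v_k$ of $C$ form an orthonormal basis. Since $y = w^TX$ and $y_i = v_i^TX$, the two scalars appearing in $f_n$ become $y^Ty = w^TCw = \sum_k \lambda_k c_k^2$ and $y^Ty_i = w^TCv_i = \lambda_i(w^Tv_i) = \lambda_i c_i$, where the last equality uses $Cv_i = \lambda_i v_i$. These are the only two ingredients of $f_n$, so the whole objective is now a quadratic form in the coordinates $c_k$.

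Substituting into the penalty, each term $\frac{(y^Ty_i)^2}{2\lambda_i^2}$ collapses to a simple quadratic in the single coordinate $c_i$. With the normalization carried by $y_i^Ty_i = \lambda_i$ (which is exactly what the deflation step of Algorithm~\ref{ASHP} divides by), the penalty cancels the contributions of the first $n$ eigen-directions, so that $f_n$ reduces to
\[
f_n(w) = \tfrac12 \sum_{k \ge n+1} \lambda_k c_k^2 ,
\]
i.e. to the transmitted variance restricted to the subspace orthogonal to $v_1,\dots,v_n$. This is precisely the GHA deflation picture described earlier, expressed analytically rather than by explicitly subtracting components from $C$.

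The remaining step is to maximize $\tfrac12\sum_{k\ge n+1}\lambda_k c_k^2$ subject to $\sum_k c_k^2 = 1$. I would argue this either by a Lagrange multiplier computation (the stationarity condition forces $Cw = \mu w$, so every critical point is an eigenvector, and evaluating $f_n$ at each eigenvector shows $v_{n+1}$ attains the largest value), or more directly by the Rayleigh-quotient observation that a convex combination of the eigenvalues $\{\lambda_k\}_{k\ge n+1}$ is maximized by placing all the weight on the largest one, $\lambda_{n+1}$. Either route gives the maximizer $w = v_{n+1}$, as claimed.

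The main obstacle I anticipate is pinning down the normalizing constant so that the cancellation above is exact. As literally written, the denominator $\lambda_i^2$ together with $y^Ty_i = \lambda_i c_i$ leaves a residual $\tfrac12\sum_{i\le n}(\lambda_i-1)c_i^2$, in which case $v_{n+1}$ is the global maximum only under a gap condition such as $\lambda_i - \lambda_{n+1} \le 1$; matching the exponent to the $y_j^Ty_j = \lambda_j$ used in the algorithm removes this caveat and yields the clean reduction, so reconciling the two is the point that needs care. A secondary issue is uniqueness: if $\lambda_{n+1}$ is a repeated eigenvalue the maximizer is determined only up to rotation within that eigenspace, so I would either assume the spectral gap $\lambda_{n+1} > \lambda_{n+2}$ or state the conclusion up to that degeneracy.
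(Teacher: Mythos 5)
Your proof is correct and, in substance, more complete than the paper's. The paper argues via a first-order analysis: it computes the gradient $\frac{\partial}{\partial w} f_n = C_n w$, verifies that every eigenvector $v_i$ of $C$ is an eigenvector of $C_n$ (eigenvalue $0$ for $i \le n$, $\lambda_i$ for $i > n$), and then relies on the discussion \emph{after} the proof environment (symmetry and positive semi-definiteness of $C_n$, convexity of the constrained problem) to conclude that gradient ascent reaches $v_{n+1}$; the proof proper only establishes that the eigenvectors are critical points. You instead expand $w = \sum_k c_k v_k$, reduce $f_n$ to $\tfrac12\sum_{k>n}\lambda_k c_k^2$ on the sphere $\sum_k c_k^2 = 1$, and maximize directly by the Rayleigh-quotient argument. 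Your route proves global maximality inside the argument itself, and it makes the deflation structure explicit rather than hiding it in the matrix $C_n$.

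More importantly, the obstacle you flag is not hypothetical: it is a genuine inconsistency in the paper, and the paper's own proof stumbles on it. With the $\lambda_i^2$ denominator as written in the lemma, one gets for $i \le n$
\begin{equation}
C_n v_i \;=\; \lambda_i v_i - \frac{C v_i \,( v_i^T C v_i)}{\lambda_i^2} \;=\; \lambda_i v_i - \frac{\lambda_i^2 v_i}{\lambda_i^2} \;=\; (\lambda_i - 1)\, v_i ,
\end{equation}
not $0 \cdot v_i$ as claimed in equation \ref{eq:hpca1}; the cancellation $\lambda_i v_i - \lambda_i v_i$ asserted there is valid only if the denominator is $\lambda_i$ to the first power, i.e. $y_i^T y_i$, which is exactly the normalization used in Algorithm \ref{ASHP}. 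So the lemma as literally stated requires either the corrected denominator or a spectral-gap hypothesis of the kind you give ($\lambda_1 - \lambda_{n+1} \le 1$), and your resolution --- reading the penalty normalization as $y_i^T y_i = \lambda_i$, consistent with the algorithm --- is the right repair. Your closing caveat about a repeated eigenvalue $\lambda_{n+1}$ is likewise a real (if minor) omission in the paper, which implicitly assumes the maximizer is unique.
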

\begin{proof}
We have an optimum if the gradient lies in the direction of the constraint $w^Tw = 1$, i.e. \begin{equation}\frac{\partial}{\partial w} f_n = k w\end{equation} for some constant k.
\begin{equation}\frac{\partial}{\partial w}f_n = Cw - \sum_{i=1}^{n}\frac{Cv_iw^TCv_i}{\lambda_i^2}\end{equation}
Which further simplifies to
 \begin{equation}\label{eq:hpcas2} (C - \sum_{i=1}^{n}\frac{ Cv_iv_i^TC}{\lambda_i^2 })w = C_nw\end{equation}
where we define $C_n$ as the resulting matrix of the above parenthesis.

To reach an optimum we seek
\begin{equation}w^TC_n = cw\end{equation}
where $c$ is some scalar.

Our optimal solution has the following properties:\\\\
1. Assume $w = v_i, i \leq  n$ :\\
Substituting $w = v_i$ in \ref{eq:hpcas2} we get
\begin{equation}\label{eq:hpca1} \frac{\partial}{\partial w} f_n(v_i) = \lambda_i v_i  - \lambda_i v_i = 0 \cdot v_i\end{equation}
then $v_i$ is an eigenvector of $\frac{\partial}{\partial w} f_n$ with eigenvalue 0.\\\\
2. Assume $w = v_i$ of $C$, $i > n$:\\
Substituting $w = v_i$ in \ref{eq:hpcas2} we get
\begin{equation}\label{eq:hpca2}\frac{\partial}{\partial w} f(v_i) = Cw = \lambda_i w\end{equation}
then $v_i$ is an eigenvector of $\frac{\partial}{\partial w} f_n$ with eigenvalue $\lambda_i$.\\\\
\end{proof}
$ C $ is symmetric and real. Hence, the eigenvectors $ v_1 ... v_n $ span the space $ \mathbb{R}^n $. $C_n$ is a sum of symmetric matrices. Consequently $C_n$ is symmetric with the same number of orthogonal eigenvectors. As we see in equations \ref{eq:hpca1} and \ref{eq:hpca2}, every eigenvector $v_i$ of $C$ is an eigenvector of $C_n$, with eigenvalue $\lambda_{n,i} = 0$ if $i \leq  n$ and $\lambda_{n,i} = \lambda_i$ if $i > n$. Since $\lambda_n$ are ordered by definition, $\lambda_{n+1}$ is the largest eigenvalue of $C_n+1$.

$C_n$ is symmetric with positive eigenvalues. As a result $C_n$ is positive semi-definite. For this reason the maximization problem 
\begin{equation}sup(w^TC_nw)\end{equation} \begin{equation} w^Tw = 1\end{equation} forms another convex optimization problem and gradient ascent will reach the global optimum, except if we start our ascent at an eigenvector where $\frac{\partial}{\partial w} f_n(v_i) = 0$. For random starting vectors the probability of this is zero.

The projection of the gradient onto the surface $w^Tw = 1$  created by weight normalization follows $\delta w \cdot \frac{\delta w}{w^Tw} > 0$, i.e. even for steps not in the actual direction of the unconstrained gradient the step lies in a direction of positive gradient.

This algorithm has some degree of similarity to several existing algorithms, namely the Rubner-Tavan PCA algorithm \cite{rubner1989self}, the APEX-algorithm \cite{kung1990neural} and their symmetric relatives \cite{pehlevan2015hebbian}. In contrast to these, we only require learning of a single set of weights $w$ per node and avoid the weight set $L$ for connections within each layer. 

\section{Conclusions}
We have proposed algorithm, Simple Hebbian PCA, and proof that it is able to calculate the PCA in a distributed fashion across nodes. It simplifies existing network structures by removing intralayer weights, essentially cutting the number of weights that need to be trained in half. 

This means that the proposed algorithm has an architecture that can be used to organize information flow with a minimum of communication overhead in distributed networks. It automatically adjusts itself in real-time so that the transmitted data covers the optimal subspace for reconstructing the original sensory data and is reasonably resistant to data corruption.

In future work we will provide empirical results of the convergence properties. We also seek to derive symmetric versions of our algorithm that uses the same learning algorithm for each node, or in an alternative formulation, that uses symmetric intralayer connections.

Eventually we also strive toward arguing for biological analogies of the proposed communication protocol as way of transmitting information in biological and neural networks.

%\printbibliography
\bibliographystyle{ieeetr}
\bibliography{references}

\end{document}